\newtheorem{theorem}{Theorem}[section]
\newtheorem{proposition}[theorem]{Proposition}
\newtheorem{fact}[theorem]{Fact}
\newtheorem{definition}[theorem]{Definition}
\title{Symplectic Structure-Aware Hamiltonian (Graph) Embeddings}
\name{Jiaxu Liu$^1$, Xinping Yi$^2$, Tianle Zhang$^1$ and Xiaowei Huang$^1$}
\address{$^1$University of Liverpool, UK and $^2$Southeast University, China}
\begin{document}
%
\maketitle
\begin{abstract}
In traditional Graph Neural Networks (GNNs), the assumption of a fixed embedding manifold often limits their adaptability to diverse graph geometries. Recently, Hamiltonian system-inspired GNNs have been proposed to address the dynamic nature of such embeddings by incorporating physical laws into node feature updates. We present Symplectic Structure-Aware Hamiltonian GNN (SAH-GNN), a novel approach that generalizes Hamiltonian dynamics for more flexible node feature updates. Unlike existing Hamiltonian approaches, SAH-GNN employs Riemannian optimization on the symplectic Stiefel manifold to adaptively learn the underlying symplectic structure, circumventing the limitations of existing Hamiltonian GNNs that rely on a pre-defined form of standard symplectic structure. This innovation allows SAH-GNN to automatically adapt to various graph datasets without extensive hyperparameter tuning. Moreover, it conserves energy during training meaning the implicit Hamiltonian system is physically meaningful. Finally, we empirically validate SAH-GNN's superiority and adaptability in node classification tasks across multiple types of graph datasets.
\end{abstract}
\begin{keywords}
Hamiltonian Graph Neural Networks, Syplectic Stiefel Manifold, Riemannian Optimization
\end{keywords}
\section{Introduction}
\label{sec:intro}
The advent of Graph Neural Networks (GNNs) has opened up vast possibilities in various applications, ranging from social network analysis \cite{kipf2016semi} to bioinformatics \cite{zitnik2019evolution} and computer vision \cite{qin2022fusing}. Vanilla GNNs assume a fixed embedding manifold, which limits their ability to capture the complex and changing nature of graph data with different geometric priors.

Recent advances have sought to alleviate these constraints by drawing inspiration from Hamiltonian systems \cite{greydanus2019hamiltonian, kang2023node}, a class of dynamical systems governed by Hamilton's equations. The Hamiltonian perspective allows for the modeling of dynamic systems where energy is conserved, offering a more flexible approach to node feature updates. These Hamiltonian-inspired GNNs, while innovative, still face limitations due to their reliance on a pre-defined standard symplectic structure, which describes the geometry of the phase space in which the system evolves.

In this paper, we propose a \textbf{S}ymplectic Structure-\textbf{A}ware \textbf{H}amiltonian GNN (SAH-GNN) that generalizes the concept of a Hamiltonian system to better suit the geometry of graph embeddings. Unlike existing approaches that are restricted to a fixed symplectic structure, SAH-GNN utilizes Riemannian optimization techniques to learn the underlying symplectic structure of the Hamiltonian motion of node embeddings during training. This allows our model to adapt to various types of graph data without extensive hyperparameter tuning. Furthermore, SAH-GNN ensures the physical integrity of the modeled Hamiltonian system by conserving energy throughout training. Our experimental results validate the efficacy of SAH-GNN in node classification tasks across various types of graph datasets, demonstrating superior performance and adaptability compared to state-of-the-art GNN models. 

In summary, this work makes the following contributions: \textit{(1)} We propose SAH-GNN, a Hamiltonian GNN that models the evolution of node features as Hamiltonian orbits over time where the motion is governed by optimizeable symplectic transformations. It provides improved flexibility in capturing different graph geometries; \textit{(2)} We introduce Riemannian optimization on the symplectic Stiefel manifold to approximate the optimal symplectic structure adaptively during training, which is a first attempt in the domain of GNNs; \textit{(3)} Empirically, we validate the energy conservation of SAH-GNN, ensuring that the modeled Hamiltonian system is physically meaningful. Additionally, we demonstrate the superior performance and adaptability of SAH-GNN on node classification tasks using different types of graph datasets.

\section{Preliminaries}
\subsection{Symplectic Group \& Symplectic Stiefel Manifold}
Let $\mathrm{Sp}(2n)$ be the group of real matrices satisfying the symplectic constraint, expressed as
\begin{align}
    \mathrm{Sp}(2n) := \{ X\in\mathbb{R}^{2n\times 2n}\mid X^\top J_{2n}X = J_{2n} \},
\end{align}
where $J_{2n}$ is a nonsingular skew-symmetric matrix, often defined to be $J_{2n}=\begin{bmatrix}
    0 & I_n\\
    -I_n& 0
\end{bmatrix}$. A matrix in $\mathrm{Sp}(2n)$ is referred to as a \textit{symplectic matrix}. More generally, we employ the set of rectangular symplectic matrices in $\mathbb{R}^{2n\times 2k}$, which is also referred to as \textit{symplectic stiefel manifold}, defined by
\begin{align}
    \mathrm{Sp}(2n, 2k) := \{ X\in\mathbb{R}^{2n\times 2k}\mid X^\top J_{2n}X = J_{2k} \},
\end{align}
for some $k$ with $1\le k\le n$. As shown in \cite{gao2021riemannian}, $\mathrm{Sp}(2n, 2k)$ is an embedded Riemannian manifold of $\mathbb{R}^{2n\times 2k}$. The tangent space on $X\in \mathrm{Sp}(2n, 2k)$ is defined by
\begin{align}
    \mathcal{T}_X\mathrm{Sp}(2n, 2k) := \{Z \in\mathbb{R}^{2n\times 2k} \mid Z^\top JX + X^\top JZ = 0\}. \label{eq:tangent-set-sp}
\end{align}

\subsection{Hamiltonian Graph Neural Network}
\label{sec:hamgnn}
The Hamiltonian orbit $(\mathbf{q}(t), \mathbf{p}(t))$ is a set of coordinates where $\mathbf{q}\in \mathbb{R}^{k}$ and $\mathbf{p}\in \mathbb{R}^{k}$ denote the positions of a set of objects and their momentum, respectively. Define a Hamiltonian scalar function $\mathcal{H}(\mathbf{q}(t), \mathbf{p}(t))$ \textit{s.t.}
\begin{align}
    \mathbf{S}_{\mathcal{H}}(t) = \begin{bmatrix}
        \frac{\partial \mathbf{q}(t)}{\partial t}\\
        \frac{\partial \mathbf{p}(t)}{\partial t}
    \end{bmatrix} = 
    \begin{bmatrix}
    0 & I_n\\
    -I_n& 0
\end{bmatrix} \begin{bmatrix}
        \frac{\partial \mathcal{H}}{\partial \mathbf{q}}\\
        \frac{\partial \mathcal{H}}{\partial \mathbf{p}}
    \end{bmatrix},
    \label{eq:symplectic-gradient}
\end{align}
where the multiplier is the standard symplectic matrix $J_{2n}$. Formally, $\mathbf{S}_{\mathcal{H}}$ is a time-dependent vector field called \textit{symplectic gradient}. Forming a gradient flow, $\mathbf{S}_{\mathcal{H}}$ can be numerically integrated via Neural ODE \cite{chen2018neural} to get the evolved coordinates
\begin{align}
    \begin{bmatrix}
        \mathbf{q}(t_1)\\
        \mathbf{p}(t_1)
    \end{bmatrix} = 
    \begin{bmatrix}
        \mathbf{q}(t_0)\\
        \mathbf{p}(t_0)
    \end{bmatrix} + \int_{t_0}^{t_1} \mathbf{S}_{\mathcal{H}}(t) dt. \label{eq:hnn-integration}
\end{align}
A Hamiltonian Graph Neural Network (HamGNN) \cite{kang2023node} considers $\mathbf{q}(t_0)\in \mathbb{R}^{n\times k}$ as the input of each layer, $\mathbf{p}(t_0)=\mathcal{Q}_\phi(\mathbf{q}(t_0))\in \mathbb{R}^{n\times k}$ as the associated momentum vector where $\mathcal{Q}_\phi:\mathbb{R}^{k}\to\mathbb{R}^{k}$ is specified by a neural network. With $\mathcal{H}_\theta: (\mathbf{q}, \mathbf{p})\to \mathbb{R}$ also being a neural network parameterized \textbf{energy} function, the layer-wise embedding can be obtained via integrating Eq.~(\ref{eq:hnn-integration}). Finally, the integrated $\mathbf{q}(t_1)$ is taken as the embedding for input to next layer or downstream tasks. 
\begin{theorem}[Conservation law \cite{da2001lectures}]
    $\mathcal{H}(\mathbf{q}(t), \mathbf{p}(t))$ is constant along the Hamiltonian orbit if the transformation is given by a standard symplectic as in Eq.~(\ref{eq:symplectic-gradient}).
    \label{thm:conservation}
\end{theorem}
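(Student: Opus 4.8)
The plan is to show that the scalar function $t \mapsto \mathcal{H}(\mathbf{q}(t),\mathbf{p}(t))$ has vanishing time derivative along any orbit solving Eq.~(\ref{eq:symplectic-gradient}); constancy on $[t_0,t_1]$ — and in particular $\mathcal{H}(\mathbf{q}(t_1),\mathbf{p}(t_1)) = \mathcal{H}(\mathbf{q}(t_0),\mathbf{p}(t_0))$ for the integrated embedding of Eq.~(\ref{eq:hnn-integration}) — then follows immediately from the fundamental theorem of calculus.

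First I would apply the chain rule to the composite map, writing
\[
\frac{d}{dt}\mathcal{H}(\mathbf{q}(t),\mathbf{p}(t)) = \langle \tfrac{\partial \mathcal{H}}{\partial \mathbf{q}}, \tfrac{\partial \mathbf{q}}{\partial t}\rangle + \langle \tfrac{\partial \mathcal{H}}{\partial \mathbf{p}}, \tfrac{\partial \mathbf{p}}{\partial t}\rangle,
\]
where $\langle\cdot,\cdot\rangle$ is the Euclidean (resp.\ Frobenius) inner product. Collecting the partials into the stacked gradient $\nabla\mathcal{H} = [\partial\mathcal{H}/\partial\mathbf{q};\ \partial\mathcal{H}/\partial\mathbf{p}]$, the right-hand side equals $\langle \nabla\mathcal{H}, \mathbf{S}_{\mathcal{H}}\rangle$. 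Next I would substitute the defining relation $\mathbf{S}_{\mathcal{H}} = J_{2n}\nabla\mathcal{H}$ from Eq.~(\ref{eq:symplectic-gradient}), obtaining $\frac{d}{dt}\mathcal{H} = \langle \nabla\mathcal{H}, J_{2n}\nabla\mathcal{H}\rangle = (\nabla\mathcal{H})^\top J_{2n}(\nabla\mathcal{H})$. Since $J_{2n}^\top = -J_{2n}$, the quadratic form $v^\top J_{2n} v$ vanishes for every $v$; equivalently, the two cross terms $\langle \partial\mathcal{H}/\partial\mathbf{q}, \partial\mathcal{H}/\partial\mathbf{p}\rangle$ and $-\langle \partial\mathcal{H}/\partial\mathbf{p}, \partial\mathcal{H}/\partial\mathbf{q}\rangle$ cancel. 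Hence $\frac{d}{dt}\mathcal{H}\equiv 0$ along the orbit, which is the claim.

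There is no deep obstacle here — it is essentially a one-line computation — so the only points needing care are bookkeeping ones: (i) in the layer setting $\mathbf{q},\mathbf{p}$ are matrix-valued, so the pairing must be read as the trace/Frobenius inner product, and one should check that the bilinear form induced by $J_{2n}$ remains alternating so the cancellation still goes through; and (ii) the statement implicitly assumes $\mathcal{H}$ is autonomous (no explicit $t$-dependence), since an explicit $\partial\mathcal{H}/\partial t$ term would survive the chain rule. I would also note in passing that the cancellation uses only skew-symmetry of the multiplier, which is precisely the property that a generic learned transformation need not have — motivating the symplectic-Stiefel constraint introduced later — whereas the standard symplectic $J_{2n}$ of Eq.~(\ref{eq:symplectic-gradient}) is skew by construction, so conservation holds unconditionally in that case.
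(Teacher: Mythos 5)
Your argument is correct and is essentially the paper's own proof: apply the chain rule, substitute Hamilton's equations from Eq.~(\ref{eq:symplectic-gradient}), and observe that the two cross terms cancel (equivalently, that $v^\top J_{2n} v = 0$ by skew-symmetry). Your added remarks on autonomy of $\mathcal{H}$ and on skew-symmetry being the operative property are sensible asides but do not change the route.
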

\begin{proof}
    From Eq.~(\ref{eq:symplectic-gradient}), we have:
\begin{align}
    \frac{\partial \mathbf{q}(t)}{\partial t} = \frac{\partial \mathcal{H}}{\partial \mathbf{p}(t)}, && \frac{\partial \mathbf{p}(t)}{\partial t} =  - \frac{\partial \mathcal{H}}{\partial \mathbf{q}(t)}.
\end{align}
Thus the total derivative of the Hamiltonian is given by
\begin{align}
    \frac{\partial \mathcal{H}}{\partial t} &= \frac{\partial \mathcal{H}}{\partial \mathbf{q}(t)} \cdot \frac{\partial \mathbf{q}(t)}{\partial t} + \frac{\partial \mathcal{H}}{\partial \mathbf{p}(t)} \cdot \frac{\partial \mathbf{p}(t)}{\partial t}\\
    &= \frac{\partial \mathcal{H}}{\partial \mathbf{q}(t)}\cdot \frac{\partial \mathcal{H}}{\partial \mathbf{p}(t)} - \frac{\partial \mathcal{H}}{\partial \mathbf{p}(t)}\cdot\frac{\partial \mathcal{H}}{\partial \mathbf{q}(t)} = 0.
\end{align}
This concludes that $\mathcal{H}(\mathbf{q}(t),\mathbf{p}(t))$ is conserved over time.
\end{proof}
For node message passing, $\mathbf{q}(t_1)$ at the end of each layer is aggregated through a normalized adjacency matrix $\tilde{A}$ by
\begin{align}
    \mathbf{q}(t_1) \gets (\tilde{A} + I_n)\mathbf{q}(t_1). \label{hamgnn-agg}
\end{align}


\section{Hamiltonian GNN with Parameterized Symplectic Structure}

\begin{figure}[t]
\centering
\includegraphics[width=\linewidth]{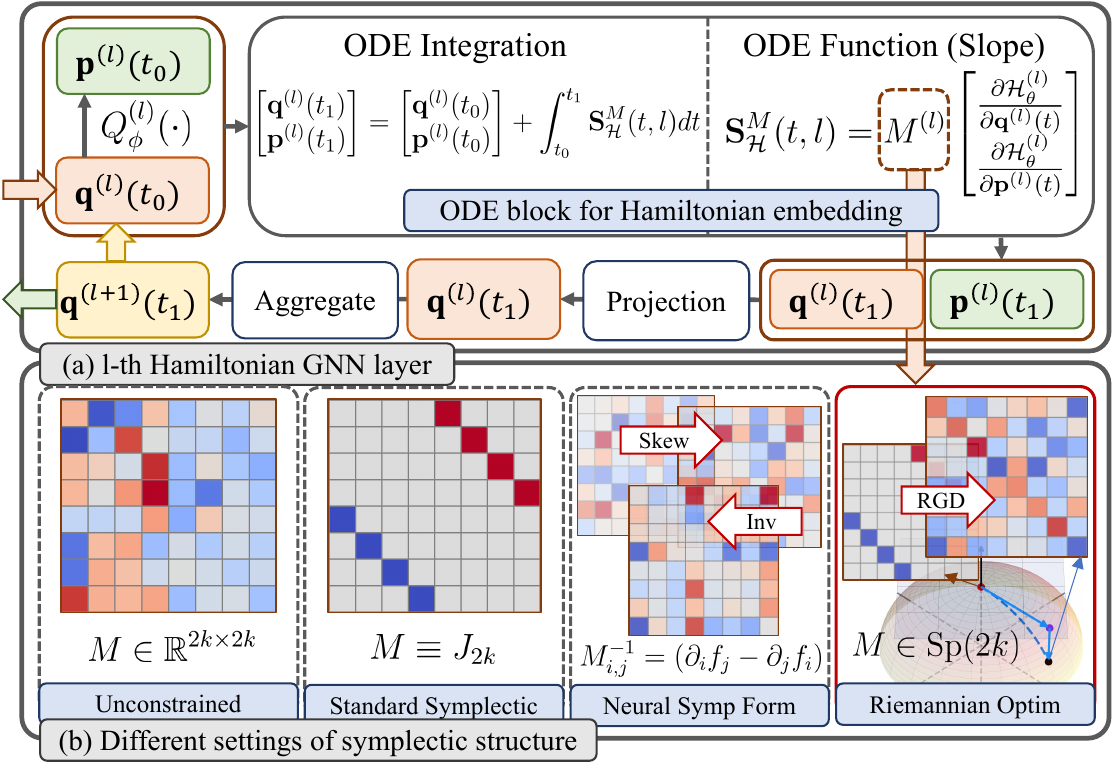}
\vspace{-0.3cm}
\caption{ \textit{(a)} Schematic of SAH-GNN. Essentially, it follows the process Eq.~(\ref{eq:hnn-integration}-\ref{hamgnn-agg}), we do not change the MLP nature of $\mathcal{Q}^{(l)}_\phi$ and $\mathcal{H}^{(l)}_\theta$. \textit{(b)} Different optimization method upon symplectic structural matrix $M$, where Riemannian gradient descent is leveraged by SAH-GNN during back-propagation.}
\label{Fig.pipeline-ham}
\end{figure}

\noindent The primary goal of a Hamiltonian Neural Network is to capture the underlying Hamiltonian dynamics of the system. One way to ensure this is to preserve the symplectic structure during the time evolution of the system. Ideally, the $\mathrm{Sp}$ manifold gives a space of matrices that preserve the symplectic structure. Motivated by this, we parameterize the symplectic matrix in Eq.~(\ref{eq:symplectic-gradient}) by $M^{(l)}\sim \mathrm{Sp}(2k^{(l+1)}, 2k^{(l)})$, such that the dynamic in Eq.~(\ref{eq:symplectic-gradient}) of $l$-th layer will now be governed by
\begin{align}
    &\mathbf{S}_{\mathcal{H}}^{M}(t,l) = M_{}^{(l)} 
    \begin{bmatrix}
        \frac{\partial \mathcal{H}}{\partial \mathbf{q}}\\
        \frac{\partial \mathcal{H}}{\partial \mathbf{p}}
    \end{bmatrix} , &\textit{s.t. } M_{}\in \mathrm{Sp}(2k^{(l+1)}, 2k^{(l)}). \label{eq:parameterized-dynamic}
\end{align}
\noindent\textbf{Advantage}: Allowing $M$ to be parameterized that deviate from the standard $J$ provides additional flexibility. The system can potentially capture non-standard dynamics or compensate for approximations in the learned Hamiltonian. Mathematically, $M\in\mathrm{Sp}$ guarantees the following properties

\begin{fact}
    For any integer $n>0$, $J_{2n}$ is a member of $\mathrm{Sp}(2n)$.
\end{fact}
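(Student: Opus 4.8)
The plan is to verify the defining constraint $X^\top J_{2n} X = J_{2n}$ directly for the special choice $X = J_{2n}$, using nothing beyond the explicit block form $J_{2n} = \begin{bmatrix} 0 & I_n \\ -I_n & 0 \end{bmatrix}$ supplied in the preliminaries. Since membership in $\mathrm{Sp}(2n)$ is defined by exactly this one matrix equation, the whole task reduces to a short block computation.

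First I would record two elementary identities. Transposing the block form gives skew-symmetry, $J_{2n}^\top = -J_{2n}$. A direct block multiplication gives $J_{2n}^2 = \begin{bmatrix} 0 & I_n \\ -I_n & 0 \end{bmatrix}\begin{bmatrix} 0 & I_n \\ -I_n & 0 \end{bmatrix} = \begin{bmatrix} -I_n & 0 \\ 0 & -I_n \end{bmatrix} = -I_{2n}$. (In particular $J_{2n}$ is nonsingular, consistent with the standing assumption on $J_{2n}$.)

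Then I would substitute $X = J_{2n}$ into the symplectic constraint and combine the two identities: $J_{2n}^\top J_{2n} J_{2n} = (-J_{2n})\, J_{2n}^2 = (-J_{2n})(-I_{2n}) = J_{2n}$. Hence the constraint holds, so $J_{2n} \in \mathrm{Sp}(2n)$, which is the claim. As a remark, this also exhibits the identity element of the group $\mathrm{Sp}(2n)$ under the relation $X^{-1} = J_{2n}^{-1} X^\top J_{2n}$, but that observation is not needed.

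There is essentially no obstacle here; the only point requiring minor care is the sign bookkeeping in the two block computations (the $-J_{2n}$ from transposition and the $-I_{2n}$ from squaring must cancel), after which the result is immediate.
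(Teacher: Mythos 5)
Your verification is correct: with $J_{2n}^\top=-J_{2n}$ and $J_{2n}^2=-I_{2n}$, the constraint $J_{2n}^\top J_{2n} J_{2n}=(-J_{2n})(-I_{2n})=J_{2n}$ holds, so $J_{2n}\in\mathrm{Sp}(2n)$. The paper states this Fact without proof, and your direct block computation is exactly the natural argument one would supply, with no gaps.
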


\begin{proposition}[Preservation of Volume and Orientation]
\label{prop:volume-preserve}
    Any matrix $M\in \mathrm{Sp}$ has $+1$ determinant, indicating transformations by $M$ preserve volumes in the phase space. The orientation also preserves as it tied to $\mathrm{sign}(\det(M))$.
\end{proposition}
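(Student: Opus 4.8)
The plan is to prove $\det(M)=+1$ for every $M\in\mathrm{Sp}(2n)$; both assertions of the proposition follow at once, since a linear map of $\mathbb{R}^{2n}$ with determinant $1$ preserves Lebesgue measure (volume) by the change-of-variables formula, and having positive determinant it preserves orientation, orientation being exactly $\mathrm{sign}(\det(M))$.

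First I would extract the cheap half. Taking determinants in the defining identity $M^\top J_{2n}M=J_{2n}$ and using multiplicativity gives $\det(M)^2\det(J_{2n})=\det(J_{2n})$; since $J_{2n}$ is nonsingular this yields $\det(M)^2=1$, so $\det(M)\in\{+1,-1\}$. The actual content of the proposition is excluding the value $-1$, which does \emph{not} follow from $\det(M)^2=1$ alone and is where all the work lies.

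To rule out $\det(M)=-1$ I would bring in the Pfaffian $\mathrm{Pf}$ of a skew-symmetric matrix, which obeys the transformation law $\mathrm{Pf}(A^\top B A)=\det(A)\,\mathrm{Pf}(B)$ for any square $A$ and skew-symmetric $B$, together with $\mathrm{Pf}(B)^2=\det(B)$. Applying this with $A=M$, $B=J_{2n}$ and invoking $M^\top J_{2n}M=J_{2n}$ gives $\mathrm{Pf}(J_{2n})=\det(M)\,\mathrm{Pf}(J_{2n})$; since $J_{2n}$ is nonsingular, $\mathrm{Pf}(J_{2n})\neq 0$ (indeed it equals $1$ for the standard block form), so cancelling it forces $\det(M)=1$. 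A Pfaffian-free alternative I would note as a remark is the exterior-algebra argument: $J_{2n}$ corresponds to a nondegenerate $2$-form $\omega$ on $\mathbb{R}^{2n}$, the symplectic condition says $M$ pulls $\omega$ back to itself, hence it fixes the $n$-fold wedge power $\omega^{\wedge n}$, and since $\omega^{\wedge n}$ is a nonzero top form this again gives $\det(M)=1$.

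One subtlety worth flagging explicitly in the write-up is the scope of the statement: a determinant only exists in the square case $\mathrm{Sp}(2n)$, so the claim should be read there; for the rectangular Stiefel manifold $\mathrm{Sp}(2n,2k)$ with $k<n$ there is no determinant, although the corresponding geometric fact — that $M$ preserves the symplectic form pulled back to the $2k$-dimensional image — still holds by the same computation. The main obstacle is precisely the exclusion of $\det(M)=-1$; it genuinely requires either the Pfaffian identity or the top-exterior-power argument, and everything surrounding it (the determinant computation, and deducing volume and orientation preservation) is routine.
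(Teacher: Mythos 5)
Your proof is correct and follows essentially the same route as the paper: both establish $\det(M)=+1$ via the Pfaffian identities $\mathrm{pf}(M^\top J M)=\det(M)\,\mathrm{pf}(J)$ and $\mathrm{pf}(J)^2=\det(J)\neq 0$, then read off volume and orientation preservation from the positive unit determinant. Your added remarks (the $\det(M)^2=1$ warm-up, the exterior-power alternative, and the caveat that the claim only makes sense in the square case, which the paper handles by assuming $k^{(l)}\equiv k^{(l+1)}$) are sound but do not change the core argument.
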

\begin{proof}
For simplicity, we assume $k^{(l)} \equiv k^{(l+1)}, \forall l$. Let \( M\in \mathrm{Sp}(2k) \), so that $M^\top J_{2k} M = J_{2k}$, where \(
J_{2k} = \begin{bmatrix}
0 & I_k \\
-I_k & 0
\end{bmatrix}
\) is the standard symplectic matrix.

As $J_{2k}$ is a skew-symmetric matrix, we sketch this proof leveraging the property of \textit{Pfaffian}. Explicitly, for a skew-symmetric matrix $A$ (\textit{i.e.} $A^\top = -A$), we have
\begin{align}
    \mathrm{pf}(A)^2 = \mathrm{det}(A).
\end{align}
With a useful property that
\begin{align}
    \mathrm{pf}(BAB^\top) = \mathrm{det}(B)\mathrm{pf}(A),
\end{align}
since $J_{2k}$ is skew-symmetric, we have the following
\begin{align}
    \mathrm{pf}(J_{2k}) &= \mathrm{pf}(M^\top J_{2k} M)\\
    &= \mathrm{det}(M) \mathrm{pf}(J_{2k}) \\
    &\Rightarrow \mathrm{det}(M) = +1.
\end{align}
This concludes that any matrix $M\in\mathrm{Sp}(2k)$ always has a determinant of $+1$; therefore, such transformations in phase space are both volume and orientation preserving.
\end{proof}

\begin{proposition}[Conservation of Symplectic Form]
The symplectic 2-form in the phase space coordinates $(\frac{\partial \mathcal{H}}{\partial \mathbf{q}}, \frac{\partial \mathcal{H}}{\partial \mathbf{p}})$ with transformation $J$ is expressed as $\omega = \sum_i d (\frac{\partial \mathcal{H}}{\partial \mathbf{q}})_i \wedge d(\frac{\partial \mathcal{H}}{\partial \mathbf{p}})_i$. Applying transformation $M\in \mathrm{Sp}$ conserves the $\omega$.
\label{prop:conserve-symp-form}
\end{proposition}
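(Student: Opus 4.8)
The plan is to observe that the $2$-form in the statement is nothing but the constant-coefficient bilinear form carried by the matrix $J$, so that ``applying the transformation $M$'' amounts to pulling this form back along the linear map it defines. Write $\xi=\big(\tfrac{\partial\mathcal{H}}{\partial\mathbf{q}},\tfrac{\partial\mathcal{H}}{\partial\mathbf{p}}\big)^\top$ for the phase-space coordinate vector; then, evaluating $dq_i\wedge dp_i$ on a pair of tangent vectors $u,v$ gives $u_i v_{k+i}-v_i u_{k+i}$, and summing over $i$ yields the matrix identity $\omega(u,v)=u^\top J_{2k}\,v$. As in Proposition~\ref{prop:volume-preserve}, I would state the result in the square case $k^{(l)}\equiv k^{(l+1)}$, so that $M\in\mathrm{Sp}(2k)$ satisfies $M^\top J_{2k}M=J_{2k}$; the rectangular case $\mathrm{Sp}(2n,2k)$ is handled verbatim.

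Next I would carry out the pullback computation. Since $M$ is a constant matrix, the coordinate change $\xi\mapsto M\xi$ has differential (pushforward on tangent vectors) equal to $M$ itself, i.e. $d(M\xi)=M\,d\xi$. Hence, for arbitrary tangent vectors $u,v$,
\begin{align}
(M^*\omega)(u,v)=\omega(Mu,Mv)=(Mu)^\top J_{2k}(Mv)=u^\top\!\big(M^\top J_{2k}M\big)v=u^\top J_{2k}\,v=\omega(u,v),
\end{align}
where the penultimate equality is precisely the symplectic constraint defining membership in $\mathrm{Sp}$. As $u,v$ were arbitrary this gives $M^*\omega=\omega$, which is the asserted conservation. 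In the rectangular setting the same line shows that the $2n$-dimensional form pulls back to the $2k$-dimensional one, since $M^\top J_{2n}M=J_{2k}$ is built into the definition of $\mathrm{Sp}(2n,2k)$.

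I do not expect a genuine obstacle here: the content is entirely the definition $X^\top J X=J$, and the work is bookkeeping---being careful with the sign and index conventions when identifying $\sum_i d\xi_i\wedge d\xi_{k+i}$ with $J_{2k}$, and, if one keeps the honest non-square dimensions, being precise about which symplectic form is the source and which is the target of the pullback. I would close by noting the coherence with Proposition~\ref{prop:volume-preserve}: preservation of $\omega$ forces preservation of its top exterior power $\omega^{\wedge k}$, which is a nonzero multiple of the Euclidean volume form, thereby recovering $\det M=+1$; so the two propositions are simply the degree-$2$ and degree-$2k$ manifestations of the single identity $M^\top J M=J$.
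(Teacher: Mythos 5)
Your proof is correct and follows essentially the same route as the paper's: both identify $\omega$ with the bilinear form given by $J_{2k}$ (the paper writes it as $d\mathbf{z}^\top J_{2k}\,d\mathbf{z}$, you evaluate it on tangent vectors), transform by $M$, and conclude from the defining identity $M^\top J_{2k} M = J_{2k}$. Your closing remark linking this to Proposition~\ref{prop:volume-preserve} via $\omega^{\wedge k}$ is a nice addition but does not change the argument.
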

\begin{proof}
Let $\mathbf{u} = \frac{\partial\mathcal{H}}{\partial\mathbf{q}}$ and $\mathbf{v} = \frac{\partial\mathcal{H}}{\partial\mathbf{p}}$. The symplectic 2-form \( \omega \) in the original coordinates can be expressed by
\begin{align}
    &\omega = \sum_{i=1}^k d\mathbf{u}_i \wedge d\mathbf{v}_i,\label{2-form-origin}
\end{align}
where $\wedge$ denotes the wedge product. Let $\mathbf{z} = \begin{bmatrix}
    \mathbf{u}\\
    \mathbf{v}
\end{bmatrix}\in \mathbb{R}^{2k}$. Then $d\mathbf{z} = \begin{bmatrix}
    d\mathbf{u}\\
    d\mathbf{v}
\end{bmatrix}$ is a $2k$-dim column vector of the differentials. Then Eq.~(\ref{2-form-origin}) is identical to
\begin{align}
    \omega = d\mathbf{z}^\top J_{2k} d\mathbf{z}.
\end{align}
To see why this holds, consider the product
\begin{align}
    d\mathbf{z}^\top J_{2k} d\mathbf{z} = \sum_{i=1}^{2k} \sum_{j=1}^{2k} (d\mathbf{z}_i) J_{ij} (d\mathbf{z}_j).
\end{align}
Expanding this out, by using the form of $J_{2k}$, we get
\begin{align}
    d\mathbf{z}^\top J_{2k} d\mathbf{z} &= \sum_{i=1}^{k} (d\mathbf{v}_id\mathbf{u}_i - d\mathbf{u}_id\mathbf{v}_i)\\
    &= \sum_{i=1}^k d\mathbf{u}_i \wedge d\mathbf{v}_i = \omega.
\end{align}
Now, applying the symplectic transformation on the original coordinates gives
\begin{align}
    \Tilde{\mathbf{z}} = \begin{bmatrix}
        \Tilde{\mathbf{u}}\\
        \Tilde{\mathbf{v}}
    \end{bmatrix} = M \begin{bmatrix}
        \mathbf{u}\\
        \mathbf{v}
    \end{bmatrix} = M\mathbf{z}.
\end{align}
Therefore, $d\Tilde{\mathbf{z}} = M d\mathbf{z}$, and then the new symplectic 2-form $\Tilde{\omega}$ can be derived by
\begin{align}
    \Tilde{\omega} &= (d\Tilde{\mathbf{z}})^\top J_{2k} d\Tilde{\mathbf{z}}\\
    &= (M d\mathbf{z})^\top J_{2k} (M d\mathbf{z})\\
    &= d\mathbf{z}^\top M^\top J_{2k} M d\mathbf{z}.
\end{align}
Since $M$ is symplectic, $M^\top J_{2k} M = J_{2k}$. Therefore:
\begin{align}
    \Tilde{\omega} =  d\mathbf{z}^\top M^\top J_{2k} M d\mathbf{z} =  d\mathbf{z}^\top J_{2k} d\mathbf{z} = \omega.
\end{align}
This confirms that the symplectic 2-form $\omega$ is preserved under the transformation $M\in\mathrm{Sp}(2k)$.
\end{proof}


Preserving volume in phase space ensures the retention of information about the system's initial state, which enables traceability to initial conditions. Conservation of the symplectic form anchors the geometry of phase space, governing interactions between coordinates and their conjugate momenta. Such a structural invariance allows coordinate transformations without changing the inherent dynamics.

\noindent\textbf{Optimization}: This paper is aware of two methods to optimize the motion controlled by $M$: the hard-constrained \textit{Riemannian optimization} and soft-constrained \textit{loss regularization}. We focus on the former and provide tools upon Riemannian geometry in Sec.~\ref{sec:riemannian-optim}. The latter is achieved by imposing regularization $\| M_{}^\top J_{2k}M_{} - J_{2k}  \|_F^2$ on loss, which however does not guarantee the symplectic constraint during back-propagation. We also follow \cite{chen2021neural, kang2023node} and cover the \textit{neural symplectic form} approach in the experiment section.

\subsection{Random Symplectic Initialization}
A random symplectic matrix $M \in \mathrm{Sp(2n, 2k)}$ can be constructed via a randomly sampled matrix $\Lambda\in \mathbb{R}^{2k\times 2k}$ by
\begin{align}
\label{eq:random-construction}
    M = \begin{bmatrix}
        V_{1:k, :}\\
    0_{(n-k) \times 2k}\\
    V_{k+1:2k, :}\\
    0_{(n-k) \times 2k}
    \end{bmatrix}, \text{ where } V = \exp \left( 
    \begin{bmatrix}
        {\left(\Lambda^\top \Lambda\right)}_{k+1:2k,:} \\
        -{\left(\Lambda^\top \Lambda\right)}_{1:k,:}
    \end{bmatrix} \right).
\end{align}
\begin{proposition}
    $M$ generated by Eq.~(\ref{eq:random-construction}) satisfies the point set constraint of $\mathrm{Sp}(2n, 2k)$, \textit{i.e.} $M^\top J_{2n} M = J_{2k}$.
    \label{prop:random-symp-init}
\end{proposition}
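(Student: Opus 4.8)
The plan is to split the construction into two independent claims: (i) the square factor $V\in\mathbb{R}^{2k\times 2k}$ is itself symplectic, i.e.\ $V^\top J_{2k}V=J_{2k}$; and (ii) the zero-padding that assembles $M$ out of the two $k\times 2k$ halves of $V$ leaves this identity intact, merely promoting $J_{2k}$ to $J_{2n}$. Granting both, the proposition is immediate.

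To prove (i), I would first recognize the exponent: writing $S:=\Lambda^\top\Lambda$, which is symmetric, the block matrix in Eq.~(\ref{eq:random-construction}) that stacks the last $k$ rows of $S$ above the negated first $k$ rows of $S$ is precisely $J_{2k}S$, by the definition of $J_{2k}$; hence $V=\exp(J_{2k}S)$. Set $A:=J_{2k}S$. Using $S^\top=S$, $J_{2k}^\top=-J_{2k}$ and $J_{2k}^{2}=-I_{2k}$, a direct check gives $A^\top J_{2k}+J_{2k}A=0$, equivalently $A^\top=-J_{2k}AJ_{2k}^{-1}$. Hence $V^\top=\exp(A)^\top=\exp(A^\top)=J_{2k}\exp(-A)J_{2k}^{-1}$, and therefore $V^\top J_{2k}V=J_{2k}\exp(-A)\exp(A)=J_{2k}$.

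For (ii), write $V$ in block-row form with top half $V_1:=V_{1:k,:}$ and bottom half $V_2:=V_{k+1:2k,:}$, each in $\mathbb{R}^{k\times 2k}$. A one-line block multiplication gives $V^\top J_{2k}V=V_1^\top V_2-V_2^\top V_1$, which equals $J_{2k}$ by (i). On the $M$ side, among the $2n$ rows of $M$ only rows $1,\dots,k$ (which are $V_1$) and rows $n+1,\dots,n+k$ (which are $V_2$) are nonzero, and the corresponding rows of $J_{2n}M$ are $V_2$ and $-V_1$, respectively. Expanding $M^\top J_{2n}M$ as a sum of row-wise outer products, every zero row drops out and what survives is $V_1^\top V_2-V_2^\top V_1$; thus $M^\top J_{2n}M=V_1^\top V_2-V_2^\top V_1=J_{2k}$, as claimed.

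The step I expect to be the main obstacle is the row bookkeeping in (ii): one has to verify carefully that, after discarding the zero rows of $M$ and of $J_{2n}M$, the surviving terms reassemble into exactly $V_1^\top V_2-V_2^\top V_1$, with the correct sign and no stray permutation. A secondary point worth double-checking is the sign/ordering convention in the exponent ($J_{2k}S$ versus $SJ_{2k}$ or a sign flip), but this is harmless: the relevant condition $A^\top J_{2k}+J_{2k}A=0$ is preserved under $A\mapsto-A$ and under $A\mapsto-J_{2k}AJ_{2k}^{-1}$, so any of the natural conventions still lands $V$ in $\mathrm{Sp}(2k)$.
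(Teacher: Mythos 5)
Your proof is correct, and its overall structure is the same as the paper's: reduce $M^\top J_{2n}M$ to $V^\top J_{2k}V$ by tracking which rows survive the zero-padding (your row-wise outer-product bookkeeping is just the paper's block computation $M^\top J_{2n}M = M_1^\top M_2 - M_2^\top M_1 = V_{1:k,:}^\top V_{k+1:2k,:} - V_{k+1:2k,:}^\top V_{1:k,:}$ written entrywise), and then show that $V=\exp(Z)$ is symplectic because the exponent satisfies $Z^\top J_{2k}+J_{2k}Z=0$. The only real divergence is how that last sub-lemma is established: the paper runs a flow argument, setting $X(t)=\exp(tZ)$ and showing $\tfrac{d}{dt}\bigl(X(t)^\top J_{2k}X(t)\bigr)=X(t)^\top(Z^\top J_{2k}+J_{2k}Z)X(t)=0$ with $X(0)=I_{2k}$, whereas you use the purely algebraic route $A^\top=-J_{2k}AJ_{2k}^{-1}\Rightarrow \exp(A)^\top=J_{2k}\exp(-A)J_{2k}^{-1}\Rightarrow V^\top J_{2k}V=J_{2k}$, which avoids the ODE entirely and is arguably more self-contained. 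You also make explicit the identification $Z=J_{2k}\Lambda^\top\Lambda$ and verify the Hamiltonian-matrix property from the symmetry of $\Lambda^\top\Lambda$, a step the paper only asserts with ``one can easily verify''; both versions are sound, and nothing in your argument has a gap.
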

\begin{proof}
Given $M = \begin{bmatrix}
        V_{1:k, :}\\
    0_{(n-k) \times 2k}\\
    V_{k+1:2k, :}\\
    0_{(n-k) \times 2k}
    \end{bmatrix}$ a $2n \times 2k$ matrix, let $M_1 = \begin{bmatrix}
        V_{1:k, :}\\
    0_{(n-k) \times 2k}
    \end{bmatrix}\in \mathbb{R}^{n\times 2k}$ and $M_2 = \begin{bmatrix}
        V_{k+1:2k, :}\\
    0_{(n-k) \times 2k}
    \end{bmatrix}\in \mathbb{R}^{n\times 2k}$, so that $M = \begin{bmatrix}
        M_1\\ M_2
    \end{bmatrix}$. Then $M^\top J_{2n} M$ can be written as
\begin{align}
    & M^\top J_{2n} M\\
    &= \begin{bmatrix}
        M_1^\top M_2^\top 
    \end{bmatrix}
    \begin{bmatrix}
        0 & I_n\\
        -I_n & 0
    \end{bmatrix}
    \begin{bmatrix}
        M_1\\ M_2
    \end{bmatrix}\\
    &= M_1^\top M_2 - M_2^\top M_1.\label{eq:MTJM-proof-1}
\end{align}
Also given $V = \exp \left( 
    \begin{bmatrix}
        {\left(\Lambda^\top \Lambda\right)}_{k+1:2k,:} \\
        -{\left(\Lambda^\top \Lambda\right)}_{1:k,:}
    \end{bmatrix} \right) \mathbb{R}^{2k\times 2k}$ where $\Lambda^\top \Lambda$ is an arbitrary symmetric matrix, $V$ is identically a squeezed $M$ without $0$ entries. Following Eq.~(\ref{eq:MTJM-proof-1})
\begin{align}
    M_1^\top M_2
    &= \begin{bmatrix}
        V_{1:k, :}^\top & 0_{(n-k)\times 2k}^\top
    \end{bmatrix} 
    \begin{bmatrix}
        V_{k+1:2k, :}\\
    0_{(n-k) \times 2k}
    \end{bmatrix}\\
    &= V_{1:k, :}^\top V_{k+1:2k, :},
\end{align}
gives
\begin{align}
    &M^\top J_{2n} M\\
    &=M_1^\top M_2 - M_2^\top M_1\\
    &= V_{1:k, :}^\top V_{k+1:2k, :} - V_{k+1:2k, :}^\top V_{1:k, :}\\
     & = \begin{bmatrix}
        V_{1:k, :}^\top V_{k+1:2k, :}^\top 
    \end{bmatrix}
    \begin{bmatrix}
        0 & I_k\\
        -I_k & 0
    \end{bmatrix}
    \begin{bmatrix}
        V_{1:k, :}\\ V_{k+1:2k, :}
    \end{bmatrix}\\
    & = V^\top J_{2k} V. \label{eq:proof-MJM-VJV}
\end{align}
Let $L = \Lambda^\top \Lambda$ and $Z = \begin{bmatrix}
        L_{k+1:2k,:}\\
        -L_{1:k,:}
    \end{bmatrix}$. Then $V = \mathrm{exp}(Z)$. One can easily verify that, with any symmetric matrix $L\in \mathrm{Sym}(2k)$, with a constructed matrix $Z = \begin{bmatrix}
        L_{k+1:2k,:}\\
        -L_{1:k,:}
    \end{bmatrix}$, we have
\begin{align}
    Z^\top J_{2k} + J_{2k} Z = 0, \label{eq:skew-symplectic-constraint}
\end{align}
which is a property of \textit{skew-symplectic} matrix $Z$.

Next, we prove the property that with any matrix $Z$ satisfying Eq.~(\ref{eq:skew-symplectic-constraint}), $\mathrm{exp}(Z)\in \mathrm{Sp}(2k)$. To show this, we take the matrix $X(t) = \mathrm{exp}(t Z)$ defined by the differential equation:
\begin{align}
    \frac{d X(t)}{dt} = \frac{d }{dt}\left(\mathrm{exp}(t Z)\right) = ZX(t).
\end{align}
We can verify the initial conditional ($t=0$) that $X(0) = I_{2k} \in \mathrm{Sp}(2k)$ since $I_{2k} J_{2k} I_{2k} = J_{2k}$. Now, to prove $\mathrm{exp}(Z)$ is symplectic, it suffices to show that $X(t)$ is symplectic for all $t$. This means we need to show
\begin{align}
    \forall t: X(t)^\top J_{2k} X(t) = J_{2k}.
\end{align}
As we already showed that $X(0)^\top J_{2k} X(0) = J_{2k}$, we further need to show
\begin{align}
    \frac{d}{dt}\left( X(t)^\top J_{2k} X(t) \right) = 0.\label{eq:zero-momentum-of-pde-proof}
\end{align}
To show Eq.~(\ref{eq:zero-momentum-of-pde-proof}), we expand the left-hand side using the product rule of differential equation
\begin{align}
    &\frac{d X(t)^\top}{dt} J_{2k} X(t) + X(t)^\top J_{2k} \frac{d X(t)}{dt}\\
    &= X(t)^\top Z^\top J_{2k} X(t) + X(t)^\top J_{2k} Z X(t)\\
    &= X(t)^\top (Z^\top J_{2k} + J_{2k} Z) X(t) \\
    & = X(t)^\top 0 X(t) = 0. 
\end{align}
This concludes the proof of Eq.~(\ref{eq:zero-momentum-of-pde-proof}), which concludes that $\forall t: X(t)\in\mathrm{Sp}(2k)$. Therefore, when $t=1$ as a special case, $X(1) = V = \mathrm{exp}(Z) \in \mathrm{Sp}(2k)$.

Following Eq.~(\ref{eq:proof-MJM-VJV}), since $V\in\mathrm{Sp}(2k)$, we have
\begin{align}
    M^\top J_{2n} M = V^\top J_{2k} V = J_{2k},
\end{align}
which concludes the proof.
\end{proof}

\subsection{Riemannian Symplectic Structure Optimization}
\label{sec:riemannian-optim}
Let us use randomly constructed $M^{(l)}\in \mathrm{Sp}(2k^{(l+1)}, 2k^{(l)})$ for each layer as starting points. Given an objective function $\mathcal{L}$, we regard each layer's $M^{(l)}$ in Eq.~(\ref{eq:parameterized-dynamic}) as a manifold parameter. Let $\Theta$ denote the Euclidean parameters. Riemannian optimizers can be defined to minimize $\mathcal{L}(\Theta, M)$ \textit{w.r.t.} $M$. In what follows, we illustrate the process of Riemannian gradient descent (RGD).

For each optimization step $s$, we first evaluate the Euclidean gradient \textit{w.r.t.} $M_s$ by $\nabla_{M_s}\mathcal{L}(\Theta, M_s)\in \mathbb{R}^{2k^{(l+1)}\times 2k^{(l)}}$, then for the gradient to satisfy the tangent space constraint in Eq.~(\ref{eq:tangent-set-sp}), we define the orthogonal projection of arbitrary Euclidean gradient $\mathbf{g}_s$ onto $\mathcal{T}_{M_s}\mathrm{Sp}(2k^{(l+1)}, 2k^{(l)})$.
\begin{definition}[Symplectic projection operator]
    Given an arbitrary matrix $P\in \mathrm{Sp}(2n, 2k)$. Define the projection matrix
\begin{align}
    \mathcal{P}_P := I_{2n} + P J_{2k} P^\top J_{2n}.
    \label{eq:projection_operator}
\end{align}
\end{definition}
\begin{proposition}
Given an arbitrary matrix $X\in \mathbb{R}^{2n\times 2k}$, a projection function $\Pi_{P}(X):\mathbb{R}^{2n\times 2k} \to \mathcal{T}_P\mathrm{Sp}(2n, 2k)$ to the tangent space can be realized through Eq.~(\ref{eq:projection_operator}) by
\begin{align}
    \Pi_{P}(X) := \mathcal{P}_P X = (I_{2n} + P J_{2k} P^\top J_{2n})X,
    \label{eq:projection_function}
\end{align}
subject to $\Pi_{P}(X)^\top J P + P^\top J \Pi_{P}(X) = 0$.
\label{prop:tangent-projector}
\end{proposition}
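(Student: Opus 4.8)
The plan is to verify the stated tangency condition by direct substitution and simplification, using only three elementary facts about the ambient objects: the defining symplectic identity $P^\top J_{2n} P = J_{2k}$ (which holds since $P\in\mathrm{Sp}(2n,2k)$), the skew-symmetry $J_{2n}^\top=-J_{2n}$ and $J_{2k}^\top=-J_{2k}$, and the involution $J_{2k}^2=-I_{2k}$ (immediate from the block form of $J_{2k}$). Write $Z:=\Pi_P(X)=X+P J_{2k} P^\top J_{2n} X$ and recall the tangent-space membership criterion from Eq.~(\ref{eq:tangent-set-sp}), namely that we must show $Z^\top J_{2n} P + P^\top J_{2n} Z = 0$.

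First I would compute the second summand. Expanding and inserting the symplectic identity gives $P^\top J_{2n} Z = P^\top J_{2n} X + (P^\top J_{2n} P)\, J_{2k}\, P^\top J_{2n} X = P^\top J_{2n} X + J_{2k}^{2}\, P^\top J_{2n} X$. Since $J_{2k}^2=-I_{2k}$, the two terms cancel and $P^\top J_{2n} Z = 0$ identically, for every $X$.

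Next I would treat the first summand $Z^\top J_{2n} P$. The quickest route is to observe that $(P^\top J_{2n} Z)^\top = Z^\top J_{2n}^\top P = -\,Z^\top J_{2n} P$, so the vanishing of $P^\top J_{2n} Z$ already forces $Z^\top J_{2n} P = 0$. Alternatively one expands it directly, using $(P J_{2k} P^\top J_{2n} X)^\top = X^\top J_{2n}^\top P J_{2k}^\top P^\top = X^\top J_{2n} P J_{2k} P^\top$ (the two sign flips cancel), and then again substitutes $P^\top J_{2n} P = J_{2k}$ and $J_{2k}^2=-I_{2k}$ to obtain cancellation. Either way both summands vanish separately, hence their sum is zero and $\Pi_P(X)\in\mathcal{T}_P\mathrm{Sp}(2n,2k)$, which is precisely the claim.

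I do not expect a genuine obstacle here: the statement is a routine linear-algebra verification, and the only thing needing care is the bookkeeping of transposes and the sign conventions of $J_{2n}$ and $J_{2k}$. If one additionally wants to justify calling $\mathcal{P}_P$ a \emph{projection} (rather than merely a map whose image lies in the tangent space), it is worth checking idempotency $\mathcal{P}_P^2=\mathcal{P}_P$ — which again reduces to $P^\top J_{2n} P = J_{2k}$ together with $J_{2k}^2=-I_{2k}$ — and that $\mathcal{P}_P$ restricts to the identity on $\mathcal{T}_P\mathrm{Sp}(2n,2k)$; but neither of these is required by the statement as written.
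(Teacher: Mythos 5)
Your proof is correct and verifies exactly the condition stated in the proposition: writing $Z=\Pi_P(X)$, you obtain $P^\top J_{2n}Z=P^\top J_{2n}X+(P^\top J_{2n}P)J_{2k}P^\top J_{2n}X=(I_{2k}+J_{2k}^{2})P^\top J_{2n}X=0$ for every $X$, and $Z^\top J_{2n}P=-(P^\top J_{2n}Z)^\top=0$ by transposition, so the two summands of the tangency condition vanish separately. The ingredients are the same as in the paper ($P^\top J_{2n}P=J_{2k}$, skew-symmetry of $J$, $J_{2k}^{2}=-I_{2k}$), but your organization is genuinely different and, as it happens, tighter than the paper's write-up. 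The paper collapses the sum into a single sandwich $X^\top\Delta X$ (and, as printed, it also places $X$ where the base point $P$ should appear in the constraint, and has the typo $P^\top J_{2n}P=0$), then concludes $\Delta=0$ by appending a factor $P$ on the right and formally ``dividing by $P$''. For rectangular $P$ with $k<n$ this cancellation is not legitimate: the computation only establishes $\Delta P=0$, while $\Delta=2J_{2n}\bigl(I_{2n}+PJ_{2k}P^\top J_{2n}\bigr)$ cannot itself vanish, since $PJ_{2k}P^\top$ has rank at most $2k<2n$. Your route---proving $P^\top J_{2n}\Pi_P(X)=0$ and transposing---needs no such step and yields the stated identity directly. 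One small caution on your optional aside: idempotency $\mathcal{P}_P^{2}=\mathcal{P}_P$ does hold by the same identities, but $\mathcal{P}_P$ does not restrict to the identity on all of $\mathcal{T}_P\mathrm{Sp}(2n,2k)$; its image is the proper subspace $\{Z : P^\top J_{2n}Z=0\}$, whereas the tangent space consists of all $Z$ for which $P^\top J_{2n}Z$ is symmetric. Neither point affects the proposition as stated, which only claims membership in the tangent space.
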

\begin{proof}
We verify that with arbitrary $X\in\mathbb{R}^{2n\times 2k}$, $\Pi_{P}(X) = (I_{2n} + P J_{2k} P^\top J_{2n})X$ is on the tangent space $\mathcal{T}_{P}\mathrm{Sp}(2n, 2k)$ defined by Eq.~(\ref{eq:tangent-set-sp}) on $P\in \mathrm{Sp}(2n, 2k)$. We start by expanding the tangent space constraint
\begin{align}
&\Pi_{P}(X)^\top J_{2n}X + X^\top J_{2n}\Pi_{P}(X)\\
&= X^\top(I_{2n} + P J_{2k} P^\top J_{2n})^\top J_{2n}X +\nonumber\\
&\qquad X^\top J_{2n} (I_{2n} + P J_{2k} P^\top J_{2n})X\\
&= X^\top ( \Delta ) X, \label{eq:XT-delta-X}
\end{align}
where
\begin{align}
    \Delta &= (P J_{2k} P^\top J_{2n})^\top J_{2n} + 2J_{2n} + J_{2n}PJ_{2k} P^\top J_{2n}\\
    &= J_{2n}^\top P J_{2k}^\top P^\top J_{2n} + 2J_{2n}+ J_{2n}PJ_{2k} P^\top J_{2n}\\
    &= \frac{J_{2n}^\top P J_{2k}^\top P^\top J_{2n}P + 2J_{2n}P+ J_{2n}PJ_{2k} P^\top J_{2n}P}{P}\label{eq:proof-verify-tangent-divide-p}.
\end{align}
Since $P\in \mathrm{Sp}(2n,2k)$, thus $P^\top J_{2n} P = 0$. Also given that $J^\top = -J$, $J^\top J=I$ and $JJ=-I$. Eq.~(\ref{eq:proof-verify-tangent-divide-p}) can be further derived as
\begin{align}
    &\frac{J_{2n}^\top P J_{2k}^\top (P^\top J_{2n}P) + 2J_{2n}P+ J_{2n}PJ_{2k} (P^\top J_{2n}P)}{P}\\
    &= \frac{J_{2n}^\top P J_{2k}^\top J_{2k} + 2J_{2n}P+ J_{2n}PJ_{2k} J_{2k}}{P}\\
    &= \frac{J_{2n}^\top P + 2J_{2n}P - J_{2n}P}{P}\\
    & = -J_{2n}  + 2J_{2n} - J_{2n} = 0.
\end{align}
Therefore, with $\Delta=0$, we can conclude that Eq.~(\ref{eq:XT-delta-X}) equals $0$. Hence, the constraint of the tangent space is satisfied.
\end{proof}

Then with the projected gradient $\Pi_{M_s}(\nabla_{M_s}\mathcal{L}(\Theta, M_s))$, the gradient update step is performed via local diffeomorphism between tangent space and the manifold. For $\mathrm{Sp}$ manifold, a retraction map $\mathcal{R}_P:\mathcal{T}_P\mathrm{Sp}(2n, 2k)\to$ $ \mathrm{Sp}(2n, 2k)$ is more commonly used instead of exponential map for computational efficiency. Given properly defined invertible $\mathcal{R}$, the Riemannian gradient update can be achieved via
\begin{align}
    M_{s+1} = \mathcal{R}_{M_{s}}(-\eta_s \Pi_{M_s}(\nabla_{M_s}\mathcal{L}(\Theta, M_s))),\label{eq:r-gradient-update}
\end{align}
where $\eta_s$ is the learning rate at gradient step $s$. To formulate $\mathcal{R}$, we first define the \textit{symplectic inverse} that is used to define such mapping.
\begin{definition}[Symplectic inverse]
    For any matrix $X\in\mathbb{R}^{2n\times 2k}$, the symplectic inverse is defined as
    \begin{align}
        X^+ := J_{2k}^\top X^\top J_{2n}.
        \label{eq:symplectic-inverse}
    \end{align}
\end{definition}
Given by \cite{bendokat2021real}, let $P\in \mathrm{Sp}(2n, 2d)$ and $X\in \mathcal{T}_P \mathrm{Sp}(2n,2k)$. With $B= P^+X$ and $H=X - PB$, the Cayley retraction on the real symplectic Stiefel manifold is expressed as
\begin{align}
    \mathcal{R}_P(\lambda X) := (\lambda H + 2P)\left( \frac{\lambda^2}{4}H^+ H - \frac{\lambda}{2}B + I_{2k} \right)^{-1} - P\label{eq:retr}.
\end{align}
In practice, we use Eq.~(\ref{eq:retr}) to perform Riemannian gradient update in Eq.~(\ref{eq:r-gradient-update}). We also give an analytical inverse to Eq.~(\ref{eq:retr}) to show its invertibility. Let $Q$ be another matrix on $\mathrm{Sp}(2n, 2k)$, the inversed retraction can be defined as
\begin{align}
    \mathcal{R}_P^{-1}(Q):= PB + H,\label{eq:invretr}
\end{align}
where
\begin{align}
    & B = 2 ( (I_{2k} + Q^+P)^{-1} - (I_{2k} + P^+Q)^{-1} ),\\
    & H = 2 ((P+Q) (I_{2k} + P^+Q)^{-1} - P ).
\end{align}
\begin{proposition}
    If $(I_{2k} + Q^+P)^{-1}$ and $(I_{2k} + P^+Q)^{-1}$ exist, it holds for $\mathcal{R}_P(\mathcal{R}_P^{-1}(Q)) = Q$ for any $P, Q\in \mathrm{Sp}(2n, 2k)$.
\label{prop:verify-inverse}
\end{proposition}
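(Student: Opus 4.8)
The plan is to substitute $Y := \mathcal{R}_P^{-1}(Q) = PB + H$ from \eqref{eq:invretr} directly into the Cayley retraction \eqref{eq:retr} (with $\lambda = 1$) and show everything collapses to $Q$. Two preliminary identities do most of the organizing. Since $P \in \mathrm{Sp}(2n,2k)$ we have $P^\top J_{2n}P = J_{2k}$, so by \eqref{eq:symplectic-inverse}, $P^+P = J_{2k}^\top P^\top J_{2n}P = J_{2k}^\top J_{2k} = I_{2k}$; and from the explicit form of $H$ one computes $P^+H = 2\big((P^+P + P^+Q)(I_{2k}+P^+Q)^{-1} - I_{2k}\big) = 2\big(I_{2k}-I_{2k}\big) = 0$. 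Hence, when \eqref{eq:retr} applied to $Y$ recomputes its internal quantities $B' := P^+Y$ and $H' := Y - PB'$, it returns exactly the $B, H$ we fed in: $B' = P^+PB + P^+H = B$ and $H' = PB + H - PB = H$. So no relabelling is needed and we may keep working with the given $B, H$.

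I would then reduce the claim to one matrix identity. Write $C := (I_{2k}+P^+Q)^{-1}$ and $D := (I_{2k}+Q^+P)^{-1}$, which exist by hypothesis; then $B = 2(D-C)$ and $H + 2P = 2(P+Q)C$. With $\lambda = 1$, \eqref{eq:retr} reads $\mathcal{R}_P(Y) = (H+2P)\big(\tfrac14 H^+H - \tfrac12 B + I_{2k}\big)^{-1} - P$, and I claim it suffices to prove
\begin{align}
    \tfrac14\, H^+H &= C + D - I_{2k}.
    \label{eq:key-identity-plan}
\end{align}
Granting \eqref{eq:key-identity-plan}, the bracketed factor equals $(C+D-I_{2k}) - (D-C) + I_{2k} = 2C$, which is invertible because $C$ is, so $\mathcal{R}_P(Y) = 2(P+Q)C\,(2C)^{-1} - P = (P+Q) - P = Q$, as required; note $P+Q$ need not be invertible for this to go through.

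The substance---and the step I expect to be the main obstacle---is \eqref{eq:key-identity-plan}. I would expand $H^+H = J_{2k}^\top H^\top J_{2n}H$ using $H = 2\big((P+Q)C - P\big)$, producing terms built from $P^\top J_{2n}P$, $P^\top J_{2n}Q$, $Q^\top J_{2n}P$, $Q^\top J_{2n}Q$. The membership relations $P^\top J_{2n}P = Q^\top J_{2n}Q = J_{2k}$ dispose of two of them; the mixed terms are rewritten through $P^\top J_{2n}Q = J_{2k}P^+Q = J_{2k}(C^{-1}-I_{2k})$ and $Q^\top J_{2n}P = J_{2k}Q^+P = J_{2k}(D^{-1}-I_{2k})$, after which the bookkeeping condenses the whole expression to $H^+H = 4\,J_{2k}^\top C^\top J_{2k}\big(I_{2k} - D^{-1}(P^+Q)C\big)$. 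Two observations then close it: the resolvent identity $(P^+Q)C = I_{2k}-C$ (immediate from $C^{-1}=I_{2k}+P^+Q$), and---the one genuinely non-obvious fact---that $J_{2k}^\top C^\top J_{2k} = D$, i.e. $C$ and $D$ are symplectic inverses of one another, which follows from $C^{-1}=I_{2k}+P^+Q$, $D^{-1}=I_{2k}+Q^+P$ and the antisymmetry $J_{2n}^\top = -J_{2n}$ (the latter yielding $Q^+P = J_{2k}^\top(P^+Q)^\top J_{2k}$). Substituting both gives $H^+H = 4\big(D - (I_{2k}-C)\big) = 4(C+D-I_{2k})$. The only delicate point throughout is keeping straight the signs generated by $J_{2k}^\top = -J_{2k}$ and $J_{2k}^2 = -I_{2k}$.

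As a consistency check I would verify the degenerate case $Q = P$: then $P^+Q = Q^+P = I_{2k}$, so $C = D = \tfrac12 I_{2k}$, $B = 0$, $H = 0$, both sides of \eqref{eq:key-identity-plan} vanish, and $\mathcal{R}_P(0) = 2P - P = P = Q$. An alternative would be to invoke the diffeomorphism property of the Cayley retraction on $\mathrm{Sp}(2n,2k)$ from \cite{bendokat2021real} and merely check that \eqref{eq:invretr} matches its inverse on the set where the stated matrices are invertible, but the self-contained computation above keeps the paper closed under its own lemmas.
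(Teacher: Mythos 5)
Your proposal is correct and follows essentially the same route as the paper's proof: both hinge on the identity $\tfrac{1}{4}H^+H = (I_{2k}+P^+Q)^{-1} + (I_{2k}+Q^+P)^{-1} - I_{2k}$ (your $C+D-I_{2k}$, the paper's $U+V-I_{2k}$), established through the same conjugation fact $J_{2k}^\top\bigl((I_{2k}+P^+Q)^\top\bigr)^{-1}J_{2k} = (I_{2k}+Q^+P)^{-1}$ (the paper's $\mathcal{F}$ step), after which the bracketed factor reduces to $2(I_{2k}+P^+Q)^{-1}$ and the retraction collapses to $Q$. Your explicit check that re-feeding $Y = PB+H$ into the retraction reproduces the same $B$ and $H$ (via $P^+P=I_{2k}$ and $P^+H=0$) is a small tightening that the paper leaves implicit, but it does not change the argument.
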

Given $P, Q\in\mathrm{Sp}(2n, 2k)$ and $X\in \mathcal{T}_{P}\mathrm{Sp}(2n, 2k)$, the retraction ($\lambda=1$) is expressed as
\begin{align}
    \mathcal{R}_P(X) := (H + 2P)\left( \frac{1}{4}H^+ H - \frac{1}{2}B + I_{2k} \right)^{-1} - P,
\end{align}
where
\begin{align}
    & B= P^+X,\\
    & H=X - PB.
\end{align}
If $(I_{2k} + Q^+P)^{-1}$ and $(I_{2k} + P^+Q)^{-1}$ exist, we have the inversed retraction
\begin{align}
    \mathcal{R}_P^{-1}(Q):= PB + H,
\end{align}
where $B$ and $H$ can be further expanded as
\begin{align}
    & B = 2 ( (I_{2k} + Q^+P)^{-1} - (I_{2k} + P^+Q)^{-1} ),\\
    & H = 2 ((P+Q) (I_{2k} + P^+Q)^{-1} - P ).
\end{align}
Also given some useful properties of symplectic inverse: suppose $A\in\mathbb{R}^{2n\times 2k}$, 
the following are equivalent
\begin{align}
    &A\in\mathrm{Sp}(2n,2k),\\ 
    &(A^+)^\top \in\mathrm{Sp}(2n,2k),\\
    &A^+A = I_{2k}.
\end{align}
Under these preliminaries, we prove $\mathcal{R}_P(\mathcal{R}_P^{-1}(Q)) = Q$.
\begin{proof}
First, we have
\begin{align}
    H = 2 ((P+Q) (I_{2k} + P^+Q)^{-1} - P ).
\end{align}
Taking the transpose of $H$ gives
\begin{align}
    H^\top = 2 (\left((I_{2k} + P^+Q)^\top \right)^{-1} (P^\top+Q^\top)  - P^\top ),
\end{align}
and thus the symplectic inverse of $H$ can be derived as
\begin{align}
    &H^+ = J_{2k}^\top H^\top J_{2n}\label{eq:derive-H+-0}\\
    &= 2 (J_{2k}^\top \left((I_{2k} + P^+Q)^\top \right)^{-1} (P^\top+Q^\top) J_{2n} - P^+ )\\
    &= 2(\underbrace{J_{2k}^\top \left((I_{2k} + P^+Q)^\top \right)^{-1} J_{2k}}_{\mathcal{F}} \underbrace{J_{2k}^\top (P^\top+Q^\top) J_{2n}}_{\mathcal{G}} -  P^+ ). \label{eq:derive-H+-1}
\end{align}
In Eq.~(\ref{eq:derive-H+-1}), we have
\begin{align}
    &\mathcal{F} = J_{2k}^\top \left((I_{2k} + P^+Q)^\top \right)^{-1} J_{2k} \\
    &= J_{2k}^\top \left( I_{2k} + Q^\top J_{2n}^\top PJ_{2k} \right)^{-1} J_{2k}\\
    &= (J_{2k})^{-1} \left( I_{2k} + Q^\top J_{2n}^\top PJ_{2k} \right)^{-1} (J_{2k}^\top)^{-1}\\
    &= \left( J_{2k}^\top J_{2k} + J_{2k}^\top Q^\top J_{2n}^\top PJ_{2k}J_{2k} \right)^{-1}\\
    &= \left( I_{2k} + J_{2k}^\top Q^\top (-J_{2n}) P (-I_{2k}) \right)^{-1}\\
    &= \left( I_{2k} + Q^+ P \right)^{-1},
\end{align}
and
\begin{align}
    &\mathcal{G} = J_{2k}^\top (P^\top+Q^\top) J_{2n}\\
    &=  J_{2k}^\top P^\top J_{2n}+ J_{2k}^\top Q^\top J_{2n}\\
    &= P^+ + Q^+.
\end{align}
Thus Eq.~(\ref{eq:derive-H+-0}) can be further simplified as
\begin{align}
    &H^+ = 2(\mathcal{F}\mathcal{G} - P^+) \\
    &= 2( ( I_{2k} + Q^+ P )^{-1} (P^+ + Q^+)  - P^+).
\end{align}
Letting $U = (I_{2k} + Q^+P)^{-1}$ and $V = (I_{2k} + P^+Q)^{-1}$, we have
\begin{align}
    &\frac{1}{4}H^+ H \label{eq:H+H-proof-start}\\
    &= (U(P^+ + Q^+) - P^+) ((P+Q)V - P)\\
    &= U(P^+ + Q^+)(P+Q)V - U(P^+ + Q^+)P\nonumber\\
    &\qquad - P^+(P+Q)V + P^+ P\\
    &= U(P^+ + Q^+)(P+Q)V - UU^{-1} - V^{-1}V + I_{2k}\\
    &= U(P^+P + Q^+P + P^+Q + Q^+Q) - I_{2k}\\
    &= U((I_{2k} + Q^+P) + (P^+Q + I_{2k}))V - I_{2k}\\
    &= U(U^{-1} + V^{-1})V- I_{2k}\\
    &= U+ V - I_{2k}.
\end{align}
Also given that
\begin{align}
    B &= 2 ( (I_{2k} + Q^+P)^{-1} - (I_{2k} + P^+Q)^{-1} )\\
    &= 2(U-V),
\end{align}
we have
\begin{align}
    &\frac{1}{4}H^+H - \frac{1}{2}B + I_{2k} \\
    &= V + U - I_{2k} - U + V + I_{2k}\\
    &= 2V.
\end{align}
Therefore
\begin{align}
    &\mathcal{R}_P(\mathcal{R}^{-1}_P (Q))\\
    &= (H + 2P)(\frac{1}{4}H^+H - \frac{1}{2}B + I_{2k})^{-1} - P\\
    &= \frac{1}{2}(H+2P)V^{-1} - P\\
    &= \frac{1}{2}(H+2P)(I_{2k} + P^+Q) - P\\
    &= \frac{1}{2}H(P^+Q + I_{2k}) + P(P^+Q + I_{2k}) - P\\
    &= ((P+Q)(I_{2k} + P^+Q)^{-1} -P) (P^+Q + I_{2k}) \nonumber\\
    &\qquad +P(P^+Q + I_{2k}) - P\\
    &= P+ Q - P = Q.
\end{align}
This concludes the proof.
\end{proof}
\section{Experiments}
\subsection{Experimental Setup}
In this section, we investigate SAH-GNN compared to different HamGNN variants: HamGNN(U) (unconstrained Euclidean parameterized $M$, Fig.~\ref{Fig.pipeline-ham}(b1)), HamGNN(S) (standard symplectic, Fig.~\ref{Fig.pipeline-ham}(b2)), HamGNN(N) (neural symplectic form $M$ as Eq.~(5) in \cite{chen2021neural}, Fig.~\ref{Fig.pipeline-ham}(b3)) and SAH-GNN (hard symplectic manifold constrained-Riemannian optimization on $M$, Fig.~\ref{Fig.pipeline-ham}(b4)). 

\noindent\textbf{Dataset:} We select datasets with various geometries, including citation networks: \textsc{Cora}, \textsc{PubMed}, \textsc{CiteSeer} \cite{namata2012query, giles1998citeseer, sen2008collective}; and low hyperbolicity graphs: \textsc{Disease}, \textsc{Airport} \cite{chami2019hyperbolic}. 

\noindent\textbf{Parameter:} We use Adam optimizer to train the Euclidean parameters and implement Riemannian gradient descent method to optimize matrix $M^{(l)}$ in each layer. We employ RK4 integrator for ODE integration. We use optuna framework to tune the hyper-parameters for all models.

\subsection{Model Performance}
\begin{table}[t]
\centering
\caption{Test accuracy (\%) for node classification. The best, second best, and third best results for each criterion are highlighted in {\color{red}\textbf{red}}, {\color{violet}\textbf{violet}}, and {\color{blue}\textbf{blue}}, respectively.}
\resizebox{\linewidth}{!}{\begin{tabular}{@{}llllllr@{}}
\toprule
Model             & $\textsc{Disease}$                               & $\textsc{Airport}$                               & $\textsc{PubMed}$                                & $\textsc{CiteSeer}$                              & $\textsc{Cora}$                                  & \multirow{2}{*}{\begin{tabular}[c]{@{}r@{}}Avg.\\ Rank\end{tabular}} \\
$\delta$          & $0$                                              & $1$                                              & $3.5$                                            & $5$                                              & $11$                                             &                                                                      \\ \midrule
MLP               & 50.0$_{\pm 0.0}$                                 & 76.9$_{\pm 1.8}$                                 & 58.1$_{\pm 1.9}$                                 & 58.1$_{\pm 1.9}$                                 & 57.1$_{\pm 1.2}$                                 & 16.4                                                                 \\
HNN               & 56.4$_{\pm 6.3}$                                 & 80.5$_{\pm 1.5}$                                 & 71.6$_{\pm 0.4}$                                 & 55.1$_{\pm 2.0}$                                 & 58.0$_{\pm 0.5}$                                 & 16.0                                                                 \\ \midrule
GCN               & 81.1$_{\pm 1.3}$                                 & 82.3$_{\pm 0.6}$                                 & 77.8$_{\pm 0.8}$                                 & 71.8$_{\pm 0.3}$                                 & 80.3$_{\pm 2.3}$                                 & 8.8                                                                  \\
GAT               & 87.0$_{\pm 2.8}$                                 & {\color[HTML]{0070C0} \textbf{93.0$_{\pm 0.8}$}} & 77.6$_{\pm 0.8}$                                 & 68.1$_{\pm 1.3}$                                 & 80.3$_{\pm 0.6}$                                 & 8.2                                                                  \\
SAGE              & 81.6$_{\pm 7.7}$                                 & 81.9$_{\pm 0.9}$                                 & 77.6$_{\pm 0.2}$                                 & 65.9$_{\pm 2.3}$                                 & 74.5$_{\pm 0.9}$                                 & 11.6                                                                 \\
SGC               & 82.8$_{\pm 0.9}$                                 & 81.4$_{\pm 2.2}$                                 & 76.8$_{\pm 1.1}$                                 & 70.9$_{\pm 1.3}$                                 & 82.0$_{\pm 1.7}$                                 & 9.2                                                                  \\ \midrule
HGNN              & 80.5$_{\pm 5.7}$                                 & 84.5$_{\pm 0.7}$                                 & 76.7$_{\pm 1.4}$                                 & 69.4$_{\pm 0.9}$                                 & 79.5$_{\pm 0.9}$                                 & 11.2                                                                 \\
HGCN              & {\color[HTML]{0070C0} \textbf{89.9$_{\pm 1.1}$}} & 85.4$_{\pm 0.6}$                                 & 76.4$_{\pm 0.9}$                                 & 65.8$_{\pm 2.0}$                                 & 78.7$_{\pm 0.9}$                                 & 10.4                                                                 \\
LGCN              & 88.4$_{\pm 1.8}$                                 & 88.2$_{\pm 0.2}$                                 & 77.4$_{\pm 1.4}$                                 & 68.1$_{\pm 2.0}$                                 & 80.6$_{\pm 0.9}$                                 & 8.6                                                                  \\
$\kappa$-GCN      & 82.1$_{\pm 1.1}$                                 & 87.9$_{\pm 1.3}$                                 & {\color[HTML]{0070C0} \textbf{79.2$_{\pm 0.7}$}} & {\color[HTML]{7030A0} \textbf{73.3$_{\pm 0.5}$}} & 81.1$_{\pm 1.5}$                                 & 6.0                                                                  \\
$\mathcal{Q}$-GCN & 71.8$_{\pm 1.2}$                                 & 89.7$_{\pm 0.5}$                                 & {\color[HTML]{FF0000} \textbf{81.3$_{\pm 1.5}$}} & {\color[HTML]{FF0000} \textbf{74.1$_{\pm 1.4}$}} & {\color[HTML]{FF0000} \textbf{83.7$_{\pm 0.4}$}} & 4.9                                                                  \\ \midrule
GRAND             & 74.5$_{\pm 1.2}$                                 & 60.0$_{\pm 1.6}$                                 & {\color[HTML]{7030A0} \textbf{79.3$_{\pm 0.5}$}} & 71.8$_{\pm 0.8}$                                 & {\color[HTML]{7030A0} \textbf{82.8$_{\pm 0.9}$}} & 7.7                                                                  \\
Vanilla ODE       & 71.8$_{\pm 18.8}$                                & 90.3$_{\pm 0.6}$                                 & 73.3$_{\pm 3.3}$                                 & 56.6$_{\pm 1.3}$                                 & 68.4$_{\pm 1.2}$                                 & 13.3                                                                 \\
HamGNN(U)         & 89.8$_{\pm 1.6}$                                 & 94.5$_{\pm 0.5}$                                 & 77.6$_{\pm 1.1}$                                 & 65.0$_{\pm 2.9}$                                 & 74.9$_{\pm 1.4}$                                 & 8.6                                                                  \\ \midrule
HamGNN(S)         & {\color[HTML]{7030A0} \textbf{91.2$_{\pm 1.4}$}} & {\color[HTML]{FF0000} \textbf{95.5$_{\pm 0.5}$}} & 78.1$_{\pm 0.5}$                                 & 70.1$_{\pm 0.9}$                                 & {\color[HTML]{0070C0} \textbf{82.1$_{\pm 0.8}$}} & 4.2                                                                  \\
HamGNN(N)         & 88.4$_{\pm 1.5}$                                 & {\color[HTML]{7030A0} \textbf{93.7$_{\pm 0.2}$}} & 78.6$_{\pm 0.3}$                                 & 71.5$_{\pm 1.4}$                                 & 81.2$_{\pm 0.6}$                                 & 5.1                                                                  \\ \midrule
SAH-GNN           & {\color[HTML]{FF0000} \textbf{94.5$_{\pm 1.5}$}} & {\color[HTML]{FF0000} \textbf{95.5$_{\pm 0.5}$}} & 78.4$_{\pm 0.6}$                                 & {\color[HTML]{0070C0} \textbf{72.1$_{\pm 0.7}$}} & {\color[HTML]{0070C0} \textbf{82.1$_{\pm 1.3}$}} & \textbf{2.8}                                                         \\ \bottomrule
\end{tabular}}
\label{tb:nc-result}
\vspace{-0.5cm}
\end{table}

We evaluate models on node classification tasks. A lower hyperbolicity $\delta$ indicates a more tree-like structure. For fairness, we adhere to the baselines that are compared in \cite{kang2023node}. These include \textit{MLP baselines}: MLP and HNN \cite{ganea2018hyperbolic}; \textit{Euclidean baselines}: GCN \cite{kipf2016semi}, GAT \cite{velivckovic2017graph}, GraphSAGE \cite{hamilton2017inductive}, and SGC \cite{wu2019simplifying}; \textit{Hyperbolic baselines}: HGNN, HGCN \cite{chami2019hyperbolic}, LGCN \cite{zhang2021lorentzian}; \textit{Product manifold baselines}: $\kappa$-GCN \cite{bachmann2020constant}, $\mathcal{Q}$-GCN \cite{xiong2022pseudo}; \textit{Continuous GNN baselines}: Vanilla ODE, GRAND \cite{chamberlain2021grand}. We report the average classification Acc. $\pm$ Std. 


As summarized in Tab.~\ref{tb:nc-result}, SAH-GNN demonstrates its versatility and robustness across a range of datasets with varying geometrical characteristics. Notably, for datasets with a lower hyperbolicity $\delta$, \textit{e.g.} \textsc{Cora}, which can be well embedded into Euclidean space, SAH-GNN achieves performance comparable to Euclidean GNNs, closely to GRAND. This is particularly evident in the \textsc{Cora} dataset, where SAH-GNN attains an accuracy of  $82.1\%$, closely aligning with the $82.8\%$ achieved by GRAND. In scenarios involving tree-like datasets, exemplified by \textsc{Disease}, SAH-GNN outperforms all hyperbolic GNNs and HamGNN variants, giving $94.5\%$ on \textsc{Airport}, which is substantially higher than the next best HamGNN(S) at $91.2\%$. This indicates its superior capability in handling complex, hierarchical data structures.

Furthermore, complex manifold approaches, \textit{e.g.} $\kappa$-GCN and $\mathcal{Q}$-GCN, while showing promising results in highly $\delta$ datasets, exhibit only average performance on those with lower $\delta$. For instance, $\mathcal{Q}$-GCN performs exceptionally well on the \textsc{PubMed} dataset ($81.3\%$) and the \textsc{CiteSeer} dataset ($74.1\%$), both exhibiting high hyperbolicity. However, its performance is moderate on \textsc{Cora}. On the other hand, SAH-GNN consistently adapts and performs well across all datasets, irrespective of their intrinsic geometrical nature.


\subsection{Energy Conservation and Training Stability}
\begin{figure}[t]
\centering
\subfigure[Cora]{
\label{Fig.sub.cora-energy}
\includegraphics[width=0.4\linewidth]{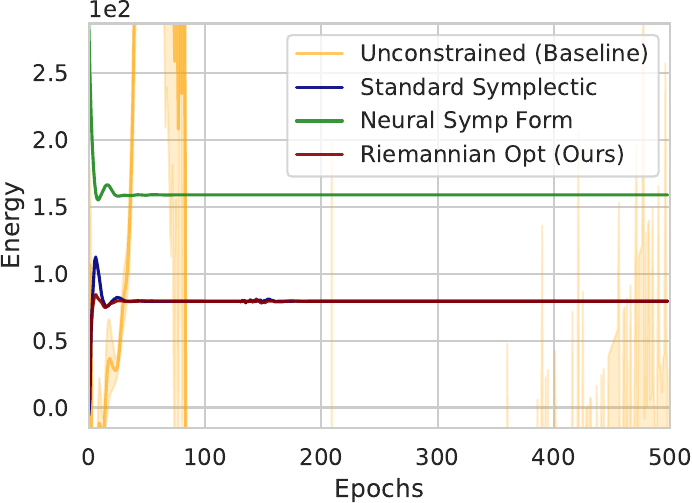}}
\subfigure[CiteSeer]{
\label{Fig.sub.citeseer-energy}
\includegraphics[width=0.4\linewidth]{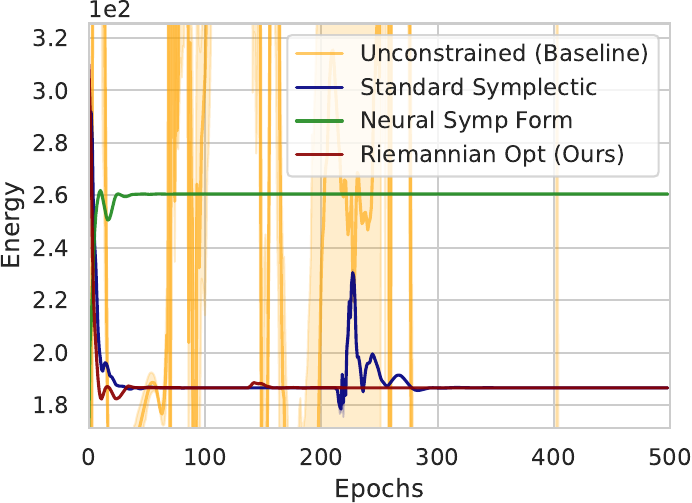}}

\vspace{-0.3cm}

\subfigure[Disease]{
\label{Fig.sub.disease-energy}
\includegraphics[width=0.4\linewidth]{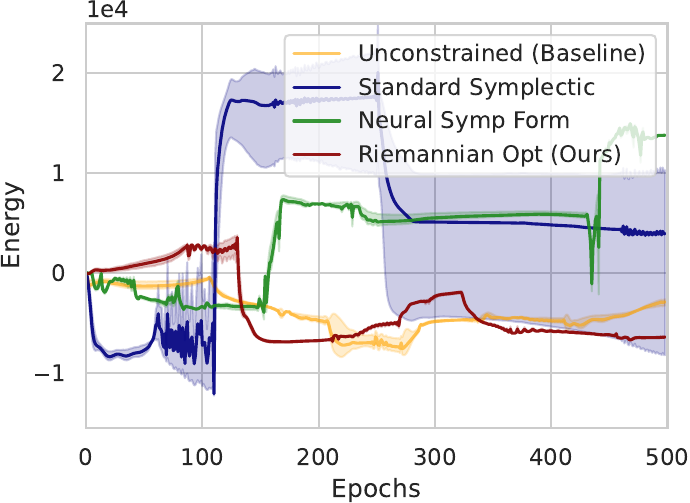}}
\subfigure[PubMed]{
\label{Fig.sub.pubmed-energy}
\includegraphics[width=0.4\linewidth]{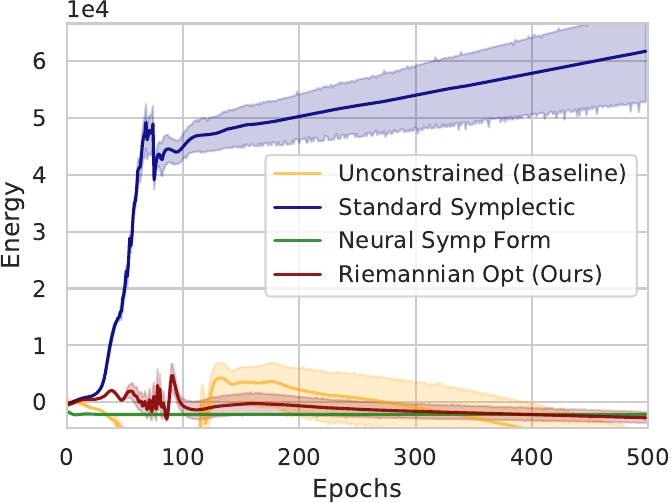}}


\caption{Variation of energy $\mathcal{H}_\theta$ through training via different optimization method of symplectic transformation $M$. A more stable curve is demanded as better conserved energy.}
\label{Fig.energy-convergence-ablation}
\vspace{-0.5cm}
\end{figure}

A learnable Hamiltonian $\mathcal{H}_\theta$ (typically a neural network $(\mathbf{q}, \mathbf{p})\to \mathbb{R}$) represents the energy of the system, parameterizing $\mathcal{H}_\theta$ enables better modeling of the physical system. According to Thm.~\ref{thm:conservation}, a physically meaningful system is reflected by a stable Hamiltonian. If $\mathcal{H}_\theta$ vibrates drastically during training, the dynamics $\mathbf{S}_\mathcal{H}$ induced by $\nabla_{(\mathbf{q}, \mathbf{p})} \mathcal{H}_\theta$ may also change dramatically, making it hard to train the network effectively and less reliable to predict future states.

Fig.~\ref{Fig.energy-convergence-ablation} visualizes the variation of energy during training with a total $500$ epochs and ODE step size $0.2$. Comparing Fig.~\ref{Fig.sub.cora-energy}-\ref{Fig.sub.citeseer-energy}, our method achieves the best stability, and slightly outperforms the standard symplectic in Fig.~\ref{Fig.sub.citeseer-energy} with a less oscillating curve. In Fig.~\ref{Fig.sub.disease-energy}, the standard symplectic exhibits large variance, and the curve of neural symplectic form oscillates within different intervals, while our method remains relatively stable. In Fig.~\ref{Fig.sub.pubmed-energy}, our method also converges towards a fixed value, whereas the energy of the standard symplectic explodes. All of this evidence suggests the prominence of Hamiltonian motion via parameterized $M$ in energy conservation.


We also show the loss curves for different $M$s in Fig.~\ref{Fig.loss-ablation}. Simply put, our approach exhibits lower loss values and less variance. This is indicative of better stability and expressivity of the underlying symplectic structure, assuming that parameterized symplectic transformations are physically meaningful.


\begin{figure}[h]
\centering
\subfigure[Training losses on Cora]{
\label{Fig.sub.cora-loss}
\includegraphics[width=0.4\linewidth]{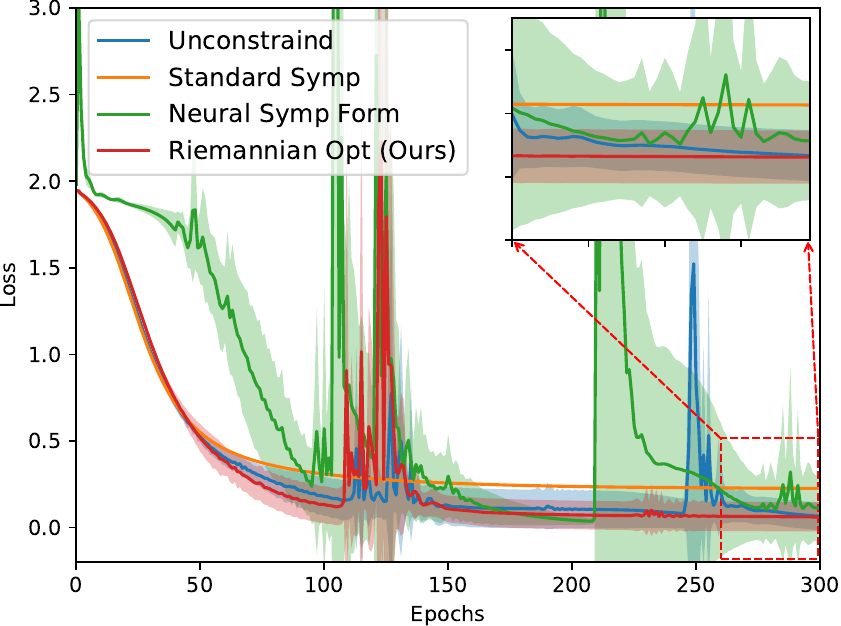}}
\subfigure[Training losses on CiteSeer]{
\label{Fig.sub.citeseer-loss}
\includegraphics[width=0.4\linewidth]{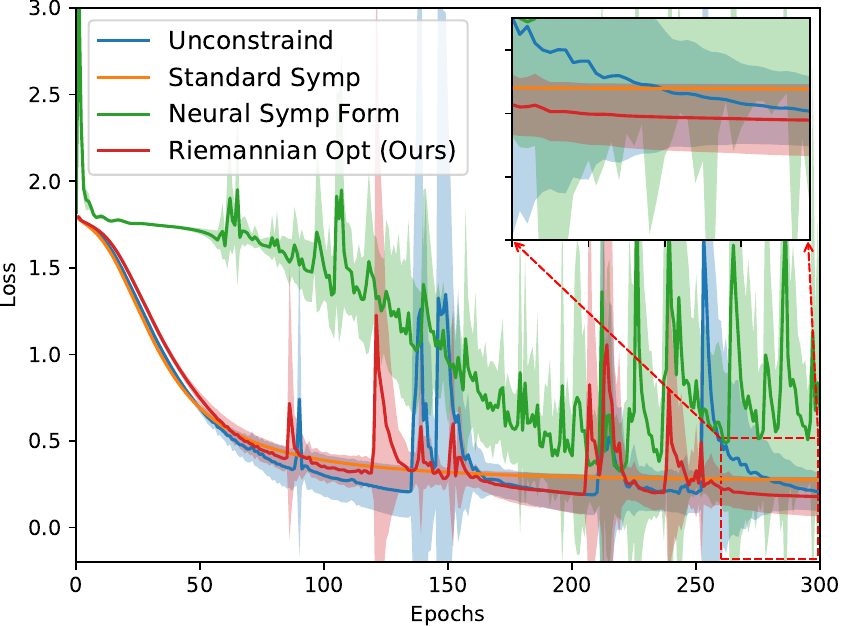}}
\vspace{-0.5cm}
\caption{Training losses on various datasets with 10 runs, we visualize the averaged curves with standard deviation.}
\label{Fig.loss-ablation}
\end{figure}

\vspace{-0.5cm}

\section{Conclusion}
This paper proposed SAH-GNN, a Hamiltonian Graph Neural Network that employs Riemannian optimization on the symplectic Stiefel manifold, for adaptive symplectic structure learning. Experimental results demonstrate that SAH-GNN not only outperforms existing methods in node classification tasks, but shows ability in energy conservation, which is particularly crucial for learning physically stable systems. Notably, the idea of learning symplectic structure with manifold constraint is versatile for Hamiltonian NN, which may open up an avenue for broader applications in various domains.

\vfill\clearpage
\bibliographystyle{IEEEbib}
\bibliography{refs}

\clearpage
\subsection*{APPENDIX}

\subsubsection*{EXPLANATION OF SYMBOLS}
We address the ambiguity of notation and symbols used in main content and proofs, detailed in Tab.~\ref{tb:summary-symbols}
\begin{table}[H]
\centering
\resizebox{0.99\linewidth}{!}{\begin{tabular}{@{}ll@{}}
\toprule
Symbol                                   & Explanation                                                                        \\ \midrule
$I_n$                                    & identity matrix of $n$ dimension                                                   \\
$J_{2n}$                                   & standard symplectic matrix, $J_{2n}=\begin{bmatrix}0 & I_n\\ -I_n& 0\end{bmatrix}$ \\
$\frac{\partial \mathbf{x}}{\partial t}$ & partial derivative of vector $\mathbf{x}$ with respect to $t$                      \\
$d\mathbf{x}$                            & total derivative of vector $\mathbf{x}$                                            \\
$\nabla$                                 & gradient                                                                           \\
$M_{a:b, c:d}$                           & slice of matrix $M$, from row $a$ to $b$ and column $c$ to $d$                     \\
$\mathbb{R}$                             & Euclidean space                                                                    \\
$\mathrm{Sp}$                            & symplectic group or symplectic Stiefel manifold                                    \\
$\mathcal{T}_{X}\mathrm{Sp}$             & tangent space of manifold $\mathrm{Sp}$ at position $X$                            \\
$\mathrm{exp}(X)$                        & exponential of each element in matrix $X$                                          \\
$\mathrm{det}(X)$                        & determinant of matrix $X$                                                          \\
$\mathrm{sign}(x)$                       & sign of scalar $x$, can be $+1$, $0$ or $-1$                                       \\
$\mathcal{L}$                            & loss function for optimization                                                     \\
$X^\top$                                 & transpose of a matrix $X$                                                          \\
$X^+$                                    & symplectic inverse of a matrix $X$, explained in Eq.~(\ref{eq:symplectic-inverse}) \\ \bottomrule
\end{tabular}}
\caption{Explanation of notation and symbols.}
\label{tb:summary-symbols}
\end{table}

\subsubsection*{DATASET STATISTICS}
We conduct experiments on five benchmark graph datasets, including citation networks (\textsc{Cora}, \textsc{PubMed}, \textsc{CiteSeer}) and tree-like graphs (\textsc{Disease}, \textsc{Airport}). They are widely used to test the performance of Riemannian geometric GNNs. We use the public (Planetoid) train/validation/test split for citation networks, and use the same split on tree-like graphs. Their data statistics are summarized in Tab.~\ref{tb:summary-dataset}.

\begin{table}[H]
\centering
\resizebox{0.9\linewidth}{!}{\begin{tabular}{lrrrrr} 
\toprule
Dataset  & \# Nodes & \# Edges & Classes & Features & $\delta$  \\ 
\midrule
\textsc{Disease}  & 1,044    & 1,043    & 2       & 1,000    & 0         \\
\textsc{Airport}  & 3,188    & 18,631   & 4       & 4        & 1         \\
\textsc{PubMed}   & 19,717   & 44,338   & 3       & 500      & 3.5       \\
\textsc{CiteSeer} & 3,327    & 4,732    & 6       & 3,703    & 5         \\
\textsc{Cora}     & 2,708    & 5,429    & 7       & 1,433    & 11        \\
\bottomrule
\end{tabular}}
\caption{Summarize of benchmark datasets statistics.}
\label{tb:summary-dataset}
\end{table}

\subsubsection*{COMPARED BASELINES}
To validate the efficacy of SAH-GNN in node classification tasks, we compare our model against the state-of-the-arts, including a number of prominent (Non-Euclidean/ODE-based)GNN variants, as baselines, summarized as follows

\subsubsection*{1. Euclidean Embedding Models}
\begin{itemize}
    \item \textbf{GCN/SGC/GAT.} Given a normalized edge weight matrix $\Tilde{P}$, let $w_{i,j}$ be the $ij$-th element of $\Tilde{P}$ \textit{i.e.,} the weight of edge $(i,j)\in \mathcal{E}$, then $\forall i:\sum_{j}w_{i,j} = 1$. GCN/SGC/GAT learns the node embedding by propagating messages over the $\Tilde{P}$. The difference is that GCN leverages the renormalization trick to obtain a fixed augmented Laplacian for all message propagation, SGC simplifies the GCN by decoupling hidden weights and activation functions from aggregation layers, and GAT learns dynamic $w_{i,j}$ which is normalized using softmax before each propagation.
    \item \textbf{SAGE.} GraphSAGE is an inductive framework that learns to generate node embeddings by sampling and aggregating features from a node's local neighborhood. Unlike methods that require the entire graph to generate embeddings, the SAGE approach enables generating embeddings for unseen nodes, making it particularly useful for dynamic graphs. It supports different aggregators, such as mean, LSTM, and pooling, to combine neighbor information and can be used for tasks like node classification and link prediction.
\end{itemize}

\subsubsection*{2. Non-Euclidean Embedding Models}
\begin{itemize}
    \item \textbf{HGNN/HGCN/LGCN.} To ensure the transformed features satisfy the hyperbolic geometry, HGNN/HGCN define feature transformation and message aggregation by pushing forward nodes to tangent space then pull them back to the hyperboloid. LGCN further enhances linear transformation and proposes Lorentz centroid message aggregation, which is an improved weighted mean operator that reduces distortion compared to local diffeomorphisms, such as tangent space operations.
    \item \textbf{$\kappa$-GCN.} Beyond hyperbolic space, $\kappa$GCN generalize embedding geometry to products of $\kappa$-Stereographic models, which includes Euclidean space, Poincar\'e ball, hyperboloid, sphere, and projective hypersphere.
    \item \textbf{$\mathcal{Q}$-GCN.} The $\mathcal{Q}$-GCN extends the hyperbolic embedding manifold to pseudo-Riemannian manifolds, accommodating graphs with mixed geometric structures. Embedding a graph to pseudo-Riemannian manifold for graph learning is particularly effective for handling complex graphs like social and molecular networks, capturing their heterogeneous topological structures and offering flexibility in modeling.
\end{itemize}

\subsubsection*{3. ODE-based Continuous Embedding Models}
\begin{itemize}
    \item \textbf{HamGNN.} HamGNN is a new GNN paradigm based on Hamiltonian mechanics. HamGNN models the embedding update of a node feature as a Hamiltonian orbit over time, which is propagated through ODE integration. This approach generalizes the hyperbolic exponential maps and allows the learning of the underlying manifold of the graph during training. The Hamiltonian orbits offer a more generalized embedding space compared to hyperbolic embedding. The architecture includes Hamiltonian layers where node features evolve along Hamiltonian orbits on a manifold, guided by a learnable Hamiltonian function (the parameterized neural network $\mathcal{H}_\theta$), and neighborhood aggregation is performed after each Hamiltonian layer. For more details, we refer the readers to \cite{greydanus2019hamiltonian,kang2023node}.
    \item \textbf{GRAND.} GRAND consider the \textit{heat} diffusion instead of graph convolution as the ODE function. The diffusion process describes a directional node embedding (heat) variation induced by neighboring positions; the strength of such variation is called diffusivity, which is modeled by multi-head attention. In such a way, the attentive aggregation is applied on \textit{difference of node pairs} rather than directly on node embeddings.
\end{itemize}

\subsubsection*{HYPERPARAMETERS}
We follow the same experiment setting as \cite{chami2019hyperbolic} and list the hyperparameters for the five datasets in Tab.~\ref{tb:hyperparams}.
\begin{table}[H]
\centering
\resizebox{0.99\linewidth}{!}{\begin{tabular}{@{}llllll@{}}
\toprule
Task          & \textsc{Disease} & \textsc{Airport} & \textsc{PubMed} & \textsc{CiteSeer} & \textsc{Cora} \\ \midrule
Learning Rate & 0.01             & 0.01             & 0.01            & 0.01              & 0.01          \\
Weight Decay  & 0.0001           & 0.0001           & 0.001           & 0.001             & 0.001         \\
Dropout       & 0                & 0                & 0.1             & 0.1               & 0             \\
Hidden Dim    & 128              & 128              & 32              & 64                & 64            \\
Layer        & 1                & 2                & 1               & 1                 & 1             \\
ODE Time & 1              & 1              & 1             & 1               & 1           \\
ODE Step Size & 0.2              & 0.5              & 0.2             & 0.2               & 0.2           \\
Max Grad Norm & 0.5              & 1                & 0.5             & 1                 & 1             \\ \bottomrule
\end{tabular}}
\caption{Hyperparameters for network embeddings.}
\label{tb:hyperparams}
\end{table}

\subsubsection*{COMPLEXITY ANALYSIS}
Our method have the \textit{same inference time} compared to HamGNN(S/U) and \textit{slightly more training time} which is comparable to HamGNN(N). Assume the structure matrix $M\in \mathrm{Sp}(2k)$ and $n=1$ for simplicity (where $k$ is the hidden dimensionality and $n$ is the number of data). During inference, the parameters matrix $M$ is fixed, hence is the same as using standard symplectic matrix $J$ and does not contribute to overall complexity. Recall Eq.~(\ref{eq:r-gradient-update}) during the traning process, the operations can be divided into three major procedures, respectively,
\begin{itemize}
    \item \textbf{Euclidean Gradient Computation.} The time complexity for computing the $\nabla_M \mathcal{L}$ depends on the specific operations involved in the loss function $\mathcal{L}$. Then this is typically $O(k^2)$ to $O(k^3)$ for matrix operations.
    \item \textbf{Projection onto the Tangent Space.} The projection operator in Eq.~(\ref{eq:projection_operator}) and projection function in Eq.~(\ref{eq:projection_function}) requires only matrix addition and multiplication, thus can be viewed as $O(k^3)$ complexity.
    \item \textbf{Retraction Map Computation.} In Eq.~(\ref{eq:retr}), the most computationally intensive part is the matrix inversion $\left( \frac{\lambda^2}{4}H^+ H - \frac{\lambda}{2}B + I_{2k} \right)^{-1}$, since all matrices are in $2k$-dimensional, thus the inversion is of $O(k^3)$ complexity.
\end{itemize}
Therefore, the overall time complexity of the Riemannian gradient descent is dominated by the matrix multiplication and inversion steps, so theoretically for CPU computation, the complexity is $O(k^3)$, which is the same as vanilla HamGNN. However, the matrix addition and multiplication are easily scalable on GPU with CUDA, where as matrix inversion is hard to parallelize on GPU. Hence, our Riemannian gradient approach have relatively slower training time compared to our baseline vanilla HamGNN.

For memory complexity, our model only introduce $M$ as optimizable parameter, which has the same memory requirement as the neural symplectic form-HamGNN, with additional storage of parameter matrix of size $2k\times 2k$. Since $k$ is small (typically $16\sim 128$ for graph learning), the extra memory usage can be disregarded.

\end{document}